\newtheorem{theorem}{Theorem}[section]
\newtheorem{proposition}[theorem]{Proposition}
\newtheorem*{proposition*}{Proposition}
\DeclareMathOperator{\mean}{mean}
\DeclareMathOperator{\BCE}{BCE}
\DeclareMathOperator{\approxmax}{approxmax}
\title{Techniques for Symbol Grounding with SATNet}
\let\@fnsymbol\@arabic
\author{
    Sever Topan\thanks{McGill University and NVIDIA,\, \texttt{sever.topan@mail.mcgill.ca}}
    \And David Rolnick\thanks{McGill University and Mila -- Quebec AI Institute,\, \texttt{\{drolnick,xsi\}@cs.mcgill.ca}}
    \And Xujie Si\footnotemark[2]
}
\begin{document}

\maketitle

\begin{abstract}

Many experts argue that the future of artificial intelligence is limited by the field’s ability to integrate symbolic logical reasoning into deep learning architectures. The recently proposed differentiable MAXSAT solver, SATNet, was a breakthrough in its capacity to integrate with a traditional neural network and solve visual reasoning problems. For instance, it can learn the rules of Sudoku purely from image examples. Despite its success, SATNet was shown to succumb to a key challenge in neurosymbolic systems known as the \emph{Symbol Grounding Problem}: the inability to map visual inputs to symbolic variables without explicit supervision (``label leakage''). In this work, we present a self-supervised pre-training pipeline that enables SATNet to overcome this limitation, thus broadening the class of problems that SATNet architectures can solve to include datasets where no intermediary labels are available at all. We demonstrate that our method allows SATNet to attain full accuracy even with a harder problem setup that prevents any label leakage. We additionally introduce a \emph{proofreading} method that further improves the performance of SATNet architectures, beating the state-of-the-art on Visual Sudoku. 

\end{abstract}

\section{Introduction}

Recent years have seen significant advancements in deep learning, providing breakthroughs in image, video, and audio processing \cite{lecun-et-al-2015-deep-learning}. Despite its success, deep learning has many known limitations, such as low interpretability, vulnerability to adversarial attacks, and difficulty in solving problems requiring  hard logical constraints \cite{carlini-et-al-2019-on-adversarial, silva-et-najafirad-2020-adversarial-survey, tjoa-et-guan-xai-survey, shalev-shwartz-et-al-2017-failure-of-deep-learning}. To overcome these limitations, experts have described the need to migrate from purely deep learning-based systems to neurosymbolic artificial intelligence systems, which integrate neural networks with logical reasoning \cite{garcez-et-al-2020-neurosymbolic-summary}. In this work we focus on improving a promising development in this field: the award-winning architecture known as SATNet \cite{wang-et-al-2019-satnet}.

SATNet is a differentiable MAXSAT solver based on a low-rank semidefinite relaxation approach. It can be integrated into traditional Deep Neural Networks (DNNs) to solve composite learning problems that require both logical reasoning and visual understanding. One such problem is Visual Sudoku, where the model must learn the rules of a Sudoku puzzle purely from visual examples. When trained end-to-end, SATNet is able to achieve 63.2\% total board accuracy in this task while traditional DNN architectures are unable to exceed 0\% \cite{wang-et-al-2019-satnet}. This was regarded as a significant breakthrough for neurosymbolic architectures. However, it was recently noted that SATNet training relies upon ``leakage'' of labels through the logical constraint layer to the DNN used to classify digits \cite{chang-et-al-2020-assessing-satnet}.

This leakage essentially means that SATNet is learning in two supervised stages, where it first trains its digit classification component under direct supervision, and only then trains its SAT layer to learn the logical constraints delineating Sudoku. When the leakage is removed, SATNet's ability to solve Visual Sudoku drops to 0\% \cite{chang-et-al-2020-assessing-satnet}. This is significant, because taken independently, these two sub-problems are significantly easier. Digit classification is considered a solved problem, and while SAT constraint mining is more difficult, it could be argued that the differentiable aspect is no longer beneficial if the system needs supervision on its inputs to learn regardless. For instance, there exist other SAT constraint miners that are not differentiable but outperform SATNet \cite{meng-&-chang-2020-ilp-miner}. Overall, the issue of being unable to learn to solve composite visual reasoning problems end-to-end is referred to as the \emph{Symbol Grounding Problem}, and is considered one of the fundamental prerequisites for artificial intelligence to perform practical logical reasoning \cite{chang-et-al-2020-assessing-satnet,harnad-1990-symbol-grounding}.

\begin{figure}[h!]
    \centering
    \includegraphics[width=0.7\textwidth]{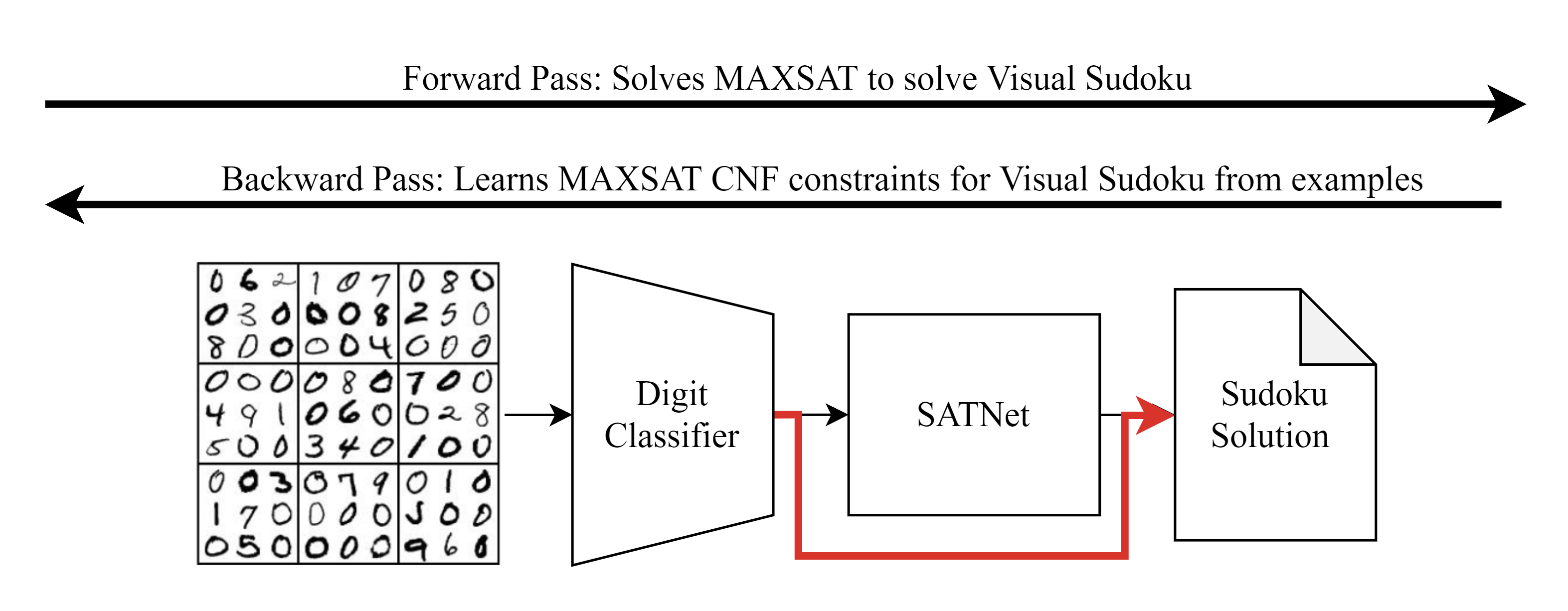}
    \caption{The SATNet architecture used to solve Visual Sudoku. The red line shows the label leakage issue, which when removed, results in the Symbol Grounding Problem.}
    \label{fig:satnet}
\end{figure}

We observe a key challenge of symbol grounding is the large gap between the compositional nature of logical reasoning and the end-to-end gradient-based nature of neural networks. The former helps to reduce a sophisticated reasoning system into simple, independent modules, each of which can be designed manually or learned, while the latter encourages fusing all components together and using gradients as a universal means for learning. Many recent approaches aim to bridge this gap by relaxing logical constraint solving through numerical optimisations~\cite{kotary-et-al-2021-constrained-optimization-survey,amos-et-kolter-2017-optnet,blondel-et-al-20-diffsort,Pogancic2020Differentiation}. Although such end-to-end gradient-based optimisation is appealing, it can fail to address seemingly simple tasks like Visual Sudoku. The success of SATNet is in fact due to \emph{inadvertent} supervision of intermediate modules. We argue that compositionality does not have to be the opposite of the end-to-end learning design. The latter is particularly preferable because it eliminates the need for supervision of intermediate modules, which is often required by a compositional design. If compositionality can be trained using self-supervision (i.e.~without manual effort), compositionality would then be at least equally preferable. This is the approach that we take in the present work, synergistically combining compositionality with end-to-end learning without any explicit intermediate supervision. We envision our methodology forming a new paradigm for tackling neurosymbolic learning. 

We describe a self-supervised pre-training method that can be used to bootstrap SATNet in order to overcome the Symbol Grounding Problem. Our methodology enables us to tackle a class of what we call \emph{Ungrounded} MAXSAT problems, where label data are available only for the output variables of the MAXSAT problem. In the Visual Sudoku case, this formulation manifests itself as a dataset where, as before, inputs consist of images of digits describing the input cells of a Sudoku board. The labels of the dataset, however, consist of numerical representations only for the board cells that were not given as inputs. This means that there is no way of identifying what digit each input image refers to except by learning the rules of the Sudoku puzzle \emph{in parallel} to predict the non-input values. We refer to this problem as \emph{Ungrounded Visual Sudoku}. We show that our method improves the state of the art on this problem from 0\% to 64.8\%, achieving similar performance on Ungrounded Visual Sudoku as SATNet with label leakage does in the grounded version of the same problem. In short, our main contributions are the following:

\begin{enumerate}
    \item We describe a self-supervised clustering and distillation process for training a visual classifier within a SATNet architecture.
    \item We introduce a \emph{Symbol Grounding Loss} that makes it possible to train logical constraint layers on an ungrounded symbol representation.
    \item We show empirically that our methodology allows SATNet to achieve full performance on \emph{ungrounded} Visual Sudoku (where label leakage is impossible), a task where previous state-of-the-art was $0\%$.
    \item We introduce a \emph{Proofreader} that improves the performance of any SATNet system (grounded or ungrounded), achieving state-of-the-art performance on Visual Sudoku.
\end{enumerate}

\section{Background}

Our contribution draws from several areas. We begin with preliminaries describing the problem, before discussing related work.

\subsection{The Problem} \label{sec:preliminaries}

MAXSAT, the optimisation analog of SAT, represents a rich set of problems to which many program complexity classes can be reduced. A MAXSAT Solver $\mathcal{S}$ aims to maximally satisfy a set of $n$ boolean clauses over $m$ variables by modulating the values of the variables. These clauses are typically written in Conjunctive Normal Form, and represented numerically as a matrix $M \in \mathbb{R}^{n \times m}$. We can further enrich this system by partitioning our variables $a_{1, ..., m}$ into a subset of fixed inputs $a^{in}_{1, ..., k}$, and variable outputs $a^{out}_{k + 1, ..., m}$. The system can then be framed functionally:
\begin{equation}\label{eq:functional-maxsat}
a^{out}_{k + 1, ..., m} = \mathcal{S}(a^{in}_{1, ..., k}, M), \qquad\text{for}\; 1 \leq k \leq m.
\end{equation}
This formulation can be used to capture Sudoku, an example used extensively in this work, where $a^{in}$ represents the input cells of a given Sudoku board, $a^{out}$ represents the cells that we aim to solve for, and $M$ encodes the rules of Sudoku.

MAXSAT Solvers can be leveraged to solve a broader class of problems that we refer to here as \emph{Visual MAXSAT Problems}. These entail a MAXSAT problem where the inputs $a^{in}$ must first be derived from some other representation $a^{in}_{visual}$. This essentially results in a two-step training problem for which neurosymbolic architectures are optimised\footnote{While it is also possible to train a system end-to-end to derive $a^{out}$ directly from $a^{in}_{visual}$, we argue that internally the system would need to have some form of representation of this two-step approach regardless.}.

We have now established the preliminaries necessary to describe the Symbol Grounding Problem in the context of Visual MAXSAT solvers. It is the problem of identifying $a^{in}$ given only $a^{in}_{visual}$ and $a^{out}$. This motivates the distinction between two types of Visual MAXSAT Datasets: \emph{grounded} and \emph{ungrounded}. An ungrounded dataset contains $a^{in}_{visual}$ as data and $a^{out}$ as labels, while a grounded dataset additionally contains $a^{in}$ in its labels (See Figure \ref{fig:grounded-data}.
\begin{figure}[h!]
    \centering
    \includegraphics[width=0.8\textwidth]{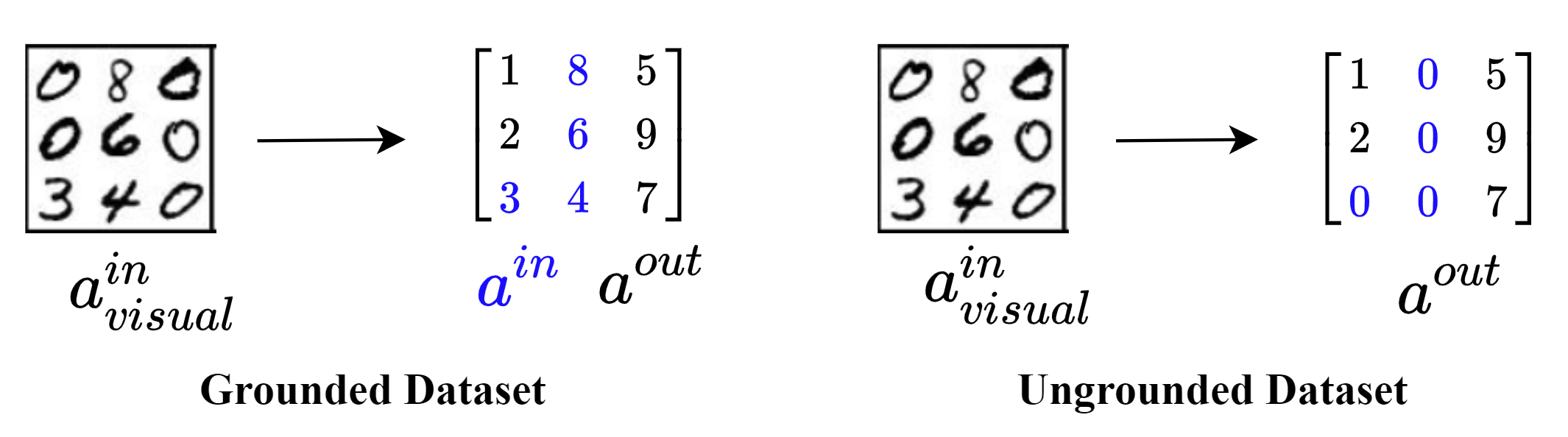}
    \caption{Examples of Grounded and Ungrounded Visual MAXSAT Datasets, focusing on a $3\times 3$ portion of a larger Sudoku board. In previous work, SATNet is able to solve only the grounded version of the problem.}
    \label{fig:grounded-data}
\end{figure}

We note that it is significantly more difficult to solve the ungrounded version of a Visual MAXSAT problem, as training cannot be trivially broken up into two stages. It is this the class of problems that we tackle in this work.

\subsection{Logical Constraint Solvers \& SATNet}

There has been significant recent interest in architectures that can integrate symbolic reasoning layers within neural networks. Many approaches, however, are only capable of integrating pre-existing logical constraints into these models \cite{selsam-et-al-2018-sat-solver, manhaeve-et-al-2018-deepproblog, dai-et-al-2018-logic-abductive-learning, palm-et-al-2017-recurrent-relational-networks, evans-et-grefentette-2017-explanatory-rules}. In the context of our formalism, this is analogous to having a fixed set of clauses $M$ for a particular problem. Conversely, there exists a family of approaches that are not differentiable, but are able to learn logical constraints by example \cite{meng-&-chang-2020-ilp-miner, nowozin-et-lampert-2011-structured-learning, bessiere-et-al-2016-constraint-acquisition}. SATNet, however, sits somewhere in between these approaches, as it is both differentiable and able to learn a matrix $M$ in order to fit some input data \cite{wang-et-al-2019-satnet}. There are a few other algorithms in this class, such as OptNet and $\partial$-Explainer \cite{kotary-et-al-2021-constrained-optimization-survey,amos-et-kolter-2017-optnet, thayaparan-et-al-2021-partial-explainer}.

\subsection{Self-Supervised Pre-Training}

Self-supervised pre-training has a long history in machine learning, notably being used to navigate highly non-convex loss landscapes in Deep Belief Networks (DBNs) \cite{hinton-et-al-2006-dbn, bengio-et-al-2007-layer-wise-training}. More recently, better methods for end-to-end training have emerged and self-supervision has now been used to pre-train image tasks on large, cheap unlabeled datasets to obtain slightly better performance on supervised tasks \cite{mao-2020-pre-training-survey, caron-et-al-2019-unsupervised-pretraining}.

In our work we return to the insight that motivated the original use of self-supervision for DBNs. The Symbol Grounding Problem essentially represents significant non-convexity in the problem space -- both symbol meanings and the way in which symbols interact with one another must be learned in parallel, with local optima existing for many combinations of prospective groundings. Self-supervised pre-training enables us to start training from a favorable position on this loss landscape.

\subsection{Clustering Algorithms \& InfoGAN} \label{sec:clustering-and-infogan}

Data clustering is a long-standing and rich field of computer science \cite{xu-et-yingjie-2015-clustering-survey}. We leverage clustering in our method in order to conduct self-supervised pre-training. While many clustering algorithms exist, for our purposes we choose to use InfoGAN, as it is able to cluster across the semantic dimension which we are interested in for MNIST with very high accuracy \cite{chen-et-al-2016-infogan}.

InfoGAN is a Generative Adversarial Network architecture which boasts disentangled, interpretable latent encodings \cite{goodfellow-et-al-2014-gans}. It maximizes the mutual information between a subset of the noise fed into its generator, and the observation which the discriminator makes. It is thus able to cluster data according to several interpretable variables. In the case of MNIST, these include handwritten digit thickness, slant, and most useful to us, the actual digit shape. This latter property is what we aim to leverage in this work. Specifically, InfoGAN can cluster MNIST digits according to their numerical value with 95\% accuracy in a completely unsupervised fashion \cite{chen-et-al-2016-infogan}.

\subsection{Knowledge Distillation}

Knowledge Distillation is a technique for training machine learning models to reach comparable performance at inference time to a larger reference model, or an ensemble of models \cite{hinton-2015-distillation, ba-et-caruna-2013-distillation, rusu-et-al-2015-distillation, urban-et-al-2016-distillation}. While more complex distillation techniques exist, our work leverages the concept in one of its most basic forms -- simply training a smaller model from a dataset generated by a larger one in order to drastically improve inference time.

\section{Method} \label{sec:method}

Our main contribution is a pre-training pipeline used to bootstrap the learning process such that SATNet can bypass the Symbol Grounding Problem. Overall our method entails the following steps.

\begin{enumerate}
    \item \textbf{Clustering}: We first perform unsupervised clustering of the input data, and distill the knowledge of the clusters into a digit classifier.
    \item \textbf{Self-Grounded Training}: We then employ a custom \emph{Symbol Grounding Loss} to identify how clusters map to the labels we have in our training data. Once the grounding is learned, we freeze it and train the rest of the system.
    \item \textbf{Proofreading}: We conclude with an optional \emph{proofreading} step which trains an additional layer in the SATNet architecture while the rest remain frozen. This was found to slightly improve performance in all SATNet architectures tested.
\end{enumerate}

\begin{figure}[h!]
    \centering
    \includegraphics[width=\textwidth]{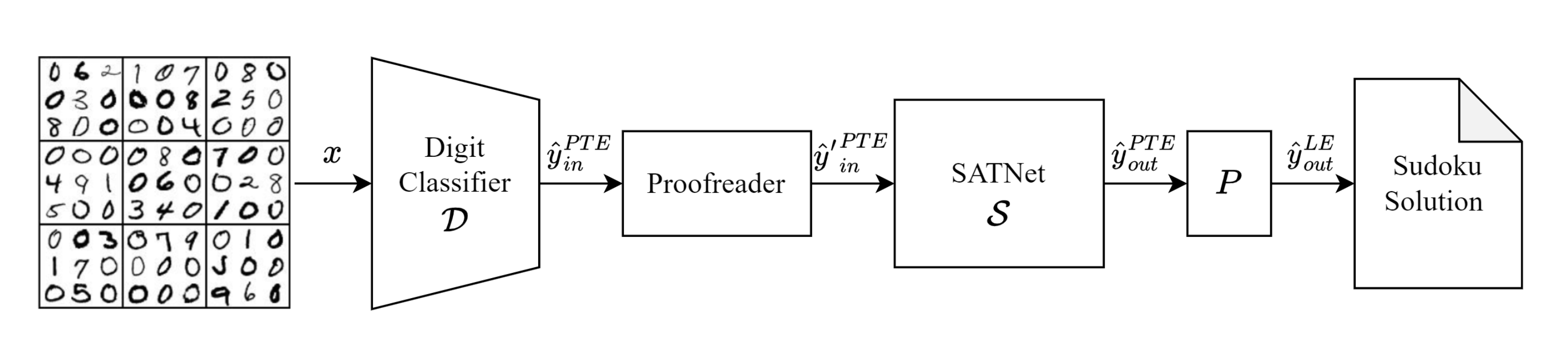}
    \caption{The architecture proposed in this work. It leverages self-supervised pre-training to solve Grounded Visual Sudoku, thereby overcoming the Symbol Grounding Problem affecting the original SATNet method.}
    \label{fig:method}
\end{figure}

Before diving in to each of these steps, we will formalize the composite visual understanding/logical reasoning problem. Assume we look at a single instance of a MAXSAT problem with $N$ variables which can fall into one of $K$ classes, where each of the $N$ variables is represented by an image of size $C \times H \times W$. Our input data is then a tensor $x \in \mathbb{R}^{N \times C \times H \times W}$, and our desired one-hot encoded output $y\in \mathbb{R}^{N\times K}$. Our digit classifier $\mathcal{D}$ takes input $x$ and returns output $\mathcal{D}(x) = \hat{y}_{in} \in \mathbb{R}^{N \times K}$. We feed this result into our SATNet layer $\mathcal{S}$ such that $\mathcal{S}(\hat{y}_{in}) = \hat{y}_{out} \in \mathbb{R}^{N \times K}$. For Ungrounded Visual Sudoku using MNIST, we have $N = 81$ (one MAXSAT variable for each cell of the $9 \times 9$ Sudoku board), $K = 9$ (digits 1 through 9), and $C \times H \times W = 1 \times 28 \times 28$ (MNIST images).

\subsection{Clustering}

Our first step in solving an Ungrounded MAXSAT problem is identifying the patterns that exist in the input data. Intuitively, we do not have to start training a digit classifier from scratch when training composite visual reasoning architectures. There exists some semantic aspect of the input image which is of relevance to the MAXSAT problem at hand, and it can often be at least partially identified in a self-supervised setting. In the Visual Sudoku case, this entails clustering our images into 9 groups (corresponding values 1 through 9).

In our experiments, we use InfoGAN to perform the clustering, as it is capable of clustering MNIST digits with 95\% accuracy \cite{chen-et-al-2016-infogan}. Any clustering algorithm may be used here however, even ones that are not differentiable. Once the clustering is complete, we can distill the clustering knowledge back into a differentiable digit classifier. In our case, we generate a dataset using the clustering algorithm, and train LeNet on the cluster allocations of the training data \cite{lecun-et-al-1998-lenet}. By doing this we implicitly map each cluster onto some one-hot representation within $\hat{y}_{in}$. However, this one-hot encoding of the MAXSAT variables may not match with the encoding present in the labels $y$. We deal with this next.

\subsection{Self-Grounded Training}\label{sec:training-the-grounding}

While our clustering algorithm might be able to achieve high accuracy, it doesn't have any information about which numerical digit each cluster is actually associated with, since we don't have access to input cell labels. This is the crux of the Symbol Grounding Problem. In an Ungrounded MAXSAT setting, the only way to learn the association between digits and numerical clusters involves \emph{jointly} learning the MAXSAT problem. In the case of Sudoku, this means that we must solve for the rules of puzzle and learn what each digit means simultaneously.

To reason about this, we consider two sets of encodings for digits: the pre-trained encoding (PTE) and the (correct) label encoding (LE), which we notate using superscripts. The digit classifier from the previous step outputs PTE-encoded predictions $\hat{y}^{PTE}_{in}$. There exists some unknown permutation matrix $P \in \mathbb{R}^{K \times K}$ that translates between encodings via $\hat{y}^{PTE}_{in} P = \hat{y}^{LE}_{in}$. Our goal is to align the PTE encoding with the LE encoding, so that we can make use of the training labels. The question of performing this translation before or after the SATNet layer is irrelevant, however. This is because the MAXSAT CNF clauses which SATNet implements are permutation-invariant \cite{lamb-et-al-2020-gnn-neurosymbolic-survey}. This means that as long as supervision is provided correctly, we can train the SATNet layer $\mathcal{S}$ on either $\hat{y}^{PTE}_{in}$ or $\hat{y}^{LE}_{in}$. \footnote{While this is expected, this was not explicitly stated in the original SATNet paper. We were able to verify this empirically by applying any permutation on the one-hot encodings of the digits in the nonvisual Sudoku setting and SATNet's performance is identical even without re-training.} In our approach we pass the prior through SATNet, and are left with $\hat{y}^{PTE}_{out}$ predictions.

We learn the correct permutation (without access to any of the input labels) simultaneously with training the SATNet layer, by introducing a \emph{Symbol Grounding Loss}, which is intended to be a smooth function that is minimized when $\hat{y}^{PTE}_{out}P \approx y^{LE}$ for some permutation matrix $P$.

Note that $\hat{y}^{PTE}_{out}$ and $y^{LE}$ are $N\times K$ matrices, and let $\hat{y}^{PTE}_{out}(i)$ and $y^{LE}(i)$ denote the $i$th columns of these matrices. Then, $y^{LE}(i)$ is a 1-hot vector capturing the entries of the output that are labeled $i$ (in the correct label encoding), while $\hat{y}^{PTE}_{out}(i)$ is a vector of predicted probabilities that the output is labeled $i$ (in the pre-trained label encoding). We define the following loss $\mathcal{L}$:
\begin{equation}\label{eq:sgl-max}
\mathcal{L}(\hat{y}^{PTE}_{out}, y^{LE}) := 1-\mean_i(\max_j(\exp[-\BCE(y^{LE}(j), \hat{y}^{PTE}_{out}(i))])),
\end{equation}
where $\BCE(\cdot, \cdot)$ denotes the binary cross-entropy loss between two vectors:
$$
\BCE(v,w) = -\frac{1}{n}\left(\sum_{k=1}^n v_k\log(w_k) +\sum_{k=1}^n (1-v_k)\log(1-w_k)\right).
$$

\begin{proposition}
\label{prop:loss} Suppose $\mathcal{L}$ is defined as in \eqref{eq:sgl-max}. Then:
\begin{enumerate}
\item $\mathcal{L}(\hat{y}^{PTE}_{out}, y^{LE})$ is minimized if and only if $\hat{y}^{PTE}_{out}P = y^{LE}$ for some permutation matrix $P$.
\item In this case, the matrix $P$ is given by $P_{ij}:=\exp[-\BCE(y^{LE}(j),\hat{y}^{PTE}_{out}(i))]$.
\end{enumerate}
\end{proposition}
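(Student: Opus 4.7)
The plan is to show $\mathcal{L}\geq 0$ and to characterize exactly when that bound is attained. The key fact is that for any binary vector $v\in\{0,1\}^n$ and probability vector $w\in[0,1]^n$, $\BCE(v,w)\geq 0$, with equality iff $w=v$; moreover, if $w$ disagrees with $v$ at some coordinate where $v$ is $1$ and $w$ is $0$ (or vice versa), the corresponding $\log$ term diverges, giving $\BCE(v,w)=+\infty$ and thus $\exp[-\BCE(v,w)]=0$. Consequently $\exp[-\BCE(v,w)]\in[0,1]$, with value $1$ iff $w=v$. Therefore each $\max_j$ appearing in \eqref{eq:sgl-max} lies in $[0,1]$ and equals $1$ iff some column of $y^{LE}$ coincides exactly with column $i$ of $\hat y^{PTE}_{out}$. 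Averaging yields $\mathcal{L}\geq 0$, with equality iff every column of $\hat y^{PTE}_{out}$ matches some column of $y^{LE}$.

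The ``if'' direction follows by direct substitution: assuming $\hat y^{PTE}_{out}P=y^{LE}$ for the permutation matrix of some $\sigma\in S_K$, I take $j=\sigma(i)$ in the max so that the exponent equals $1$ for every $i$, whence $\mathcal{L}=0$.

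For ``only if'' and the explicit form of $P$: from $\mathcal{L}=0$, each column $\hat y^{PTE}_{out}(i)$ matches some $y^{LE}(j_i)$, and in particular $\hat y^{PTE}_{out}(i)$ is $\{0,1\}$-valued. Since the rows of $\hat y^{PTE}_{out}$ sum to $1$, they must be one-hot. Assuming the columns of $y^{LE}$ are distinct (which holds generically whenever every label appears in the output), the map $i\mapsto j_i$ is injective on $[K]$ and hence a bijection, giving a permutation $\sigma$ with associated permutation matrix $P$ satisfying $\hat y^{PTE}_{out}P=y^{LE}$. For the explicit formula, at this minimum $\exp[-\BCE(y^{LE}(j),\hat y^{PTE}_{out}(i))]$ equals $1$ when $j=\sigma(i)$ (identical binary vectors) and $0$ otherwise (two distinct $\{0,1\}$-vectors disagree at a coordinate forcing BCE to diverge), which matches $P_{ij}$ exactly.

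The principal obstacle is the bijectivity step: the pointwise maxima in $\mathcal{L}$ only guarantee the existence of some matching $j_i$ for each $i$, not that this matching is injective or surjective. The argument above rests on (i) exploiting the probability-simplex constraint on the rows of $\hat y^{PTE}_{out}$ to force them to be one-hot, and (ii) the distinctness of the columns of $y^{LE}$. Without the latter (e.g.\ if some label never appears in the output), $P$ is only pinned down up to permutations of the corresponding zero columns, so the statement should be read modulo this benign ambiguity.
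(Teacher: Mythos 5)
Your proof follows essentially the same route as the paper's: lower-bound $\mathcal{L}$ by $0$ using $\BCE \ge 0$, characterize equality as ``every column of $\hat{y}^{PTE}_{out}$ coincides with some column of $y^{LE}$,'' and derive the explicit form of $P$ from the fact that $\exp[-\BCE]$ is $1$ on identical binary columns and $0$ on distinct ones. The one substantive difference is that you are more careful than the paper at the step the paper glosses over: its proof of part (1) stops at ``for every $i$ there exists a $j$ with $y^{LE}(j)=\hat{y}^{PTE}_{out}(i)$'' and declares part (1) proven, whereas obtaining a genuine permutation matrix requires the assignment $i\mapsto j_i$ to be a bijection. Your argument supplies exactly the missing ingredients --- the row-stochastic constraint forcing one-hot rows, and distinctness of the columns of $y^{LE}$ (equivalently, at most one label absent from the output) --- and correctly flags that when several labels are absent, $P$ exists but is only determined up to permutation of the corresponding zero columns, so the uniqueness implicit in part (2) holds only modulo that ambiguity. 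This is a strictly more complete treatment than the one in Appendix \ref{sec:appendix-sgl-simplification}.
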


This proposition, which is proven in Appendix \ref{sec:appendix-sgl-simplification}, shows that by minimizing $\mathcal{L}$, we learn an approximate permutation matrix $\hat{P}\approx P$, given by:
\begin{equation}
\label{eq:perm}
\hat{P}_{ij}:=\exp[-\BCE(y^{LE}(j),\hat{y}^{PTE}_{out}(i))].
\end{equation}

In practice, we do not minimize $\mathcal{L}$, since the $\max$ function presents an obstacle to effective training. Therefore, we relax the $\max$ operation to a function $\approxmax$. This finally gives us our \emph{Symbol Grounding Loss} $\mathcal{L}_{SG}$:
\begin{equation}
\label{eq:sgl-soft}
\mathcal{L}_{SG}(\hat{y}^{PTE}_{out}, y^{LE}) :=1-\mean_i(\approxmax_j(\exp[-\BCE(y^{LE}(j),\hat{y}^{PTE}_{out}(i))])).
\end{equation}
In our experiments, we set $\approxmax$ equal to the 2-norm; however, we did not find that performance was sensitive to the exact choice of $\approxmax$, and other choices are also reasonable.

Having defined $\mathcal{L}_{SG}$, we incorporate it into our training pipeline as follows: First, we freeze the digit classifier $\mathcal{D}$, and train $\mathcal{S}$ under $\mathcal{L}_{SG}$. This begins to train $\mathcal{S}$ while also learning a permutation matrix $\hat{P} \approx P$ (defined by \eqref{eq:perm}). Note that since we are working with the Ungrounded Visual Sudoku task, the permutation matrix is learned by means of SATNet itself, and it is impossible for labels to be leaked, since the training process does not even have access to labels for the input entries.

Second, once $\hat{P}$ has converged to a clear permutation matrix, we freeze this permutation and use it to align the PTE labels with the correct LE labels by multiplying the final outputs $\hat{y}^{PTE}_{out}$ by the learned $\hat{P}$. Now that the Symbol Grounding Loss is no longer needed, we switch to the traditional BCE loss and complete the training of $\mathcal{S}$, also unfreezing $\mathcal{D}$ to allow additional training.

\subsection{Proofreading}

The performance of a SATNet architecture can be improved by the addition of a \emph{Proofreader} layer. This consists of a linear layer added just before the SATNet layer $\mathcal{S}$, initialized to a slightly noisy identity transform $\mathbb{R}^{N\times K} \rightarrow \mathbb{R}^{N\times K}$. (In the Sudoku case, $N=81$ and $K=9$.) We freeze the layers in the original model, and train only the proofreader layer. This is an optional final step resulting in a slight performance improvement. We find that the Proofreader layer also improves the performance of the original SATNet (with label leakage), in both the visual and nonvisual Sudoku settings.

\section{Results} \label{sec:results}

The above procedure allows us to achieve comparable results on an Ungrounded Visual Sudoku Dataset as the original SATNet architecture has in the grounded setting\footnote{Note that training under a grounded dataset is equivalent to the label leakage problem described in \cite{chang-et-al-2020-assessing-satnet}}, with results being presented in Table \ref{tab:main-results}. We may thus claim to solve the Symbol Grounding Problem in the case of Visual Sudoku.

All experiments were carried out on a Nvidia GTX1070 across 100 epochs, with each epoch taking roughly 2 minutes. The Adam optimiser was used with learning rate of $2\times 10^{-3}$ for the SATNet layer, and $10^{-5}$ for the digit classifier \cite{kingma-et-ba-2014-adam}. Standard deviations were calculated across 5 runs. We used the Sudoku Dataset made available under an MIT License from the original SATNet work \cite{wang-et-al-2019-satnet}.

\begin{table}[h!]
    \centering
    \begin{tabular}{ ccccc}
    \hline
     \textbf{Model} & \textbf{Grounded vs.} & \textbf{Total Board} & \textbf{Per-Cell} & \textbf{Visual} \\ 
     \textbf{Configuration} & \textbf{Ungrounded Data} & \textbf{Accuracy} (\%) & \textbf{Accuracy} (\%) & \textbf{Accuracy} (\%) \\
     \hline
     Original SATNet & 
     grounded & 
     66.5 $\pm$ 1.0 & 
     98.8 $\pm$ 0.1 & 
     99.0 $\pm$ 0.0 \\
     Original SATNet & 
     ungrounded & 
     0 $\pm$ 0.0 & 
     11.2 $\pm$ 0.1 & 
     11.6 $\pm$ 0.0 \\
     \hline
    \textbf{Our Method} & \textbf{ungrounded} &
    \textbf{64.8 $\pm$ 3.0} &
    \textbf{98.4 $\pm$ 0.2} &
    \textbf{98.9 $\pm$ 0.1} \\
     \hline
    \end{tabular}
    \vspace{0.1in}
    \caption{Performance of our method compared to the original SATNet architecture between grounded and ungrounded versions of the Visual Sudoku problem. Note that we distinguish the total board accuracy (how many whole boards are correct) from per-cell accuracy (how many board cells are correct) and visual accuracy (how many input board cells are correct). Our method achieves comparable performance on a significantly more difficult version of the problem, thus solving the Symbol Grounding Problem.}
    \label{tab:main-results}
\end{table}

\begin{figure}[h!]
\centering
\begin{subfigure}{0.2\textwidth}
    \includegraphics[width=0.9\linewidth]{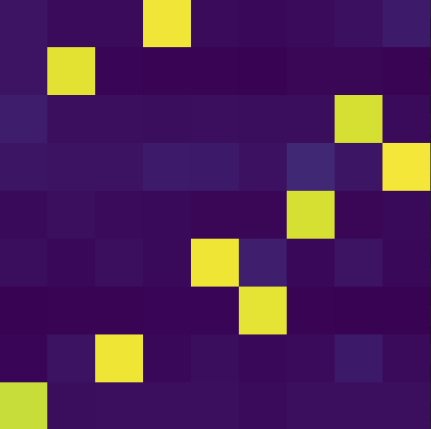} 
\end{subfigure}
\begin{subfigure}{0.2\textwidth}
    \includegraphics[width=0.9\linewidth]{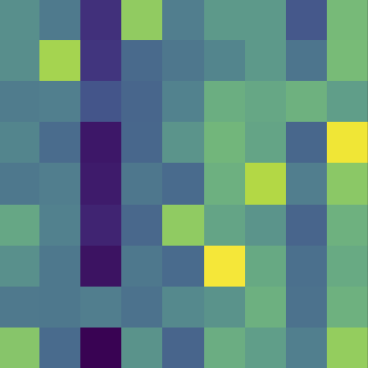} 
\end{subfigure}

\caption{Permutation matrices extracted from the Symbol Grounding Loss function. On the left is a matrix extracted given a clustering with high accuracy, and the right matrix shows the results in a case where the clustering accuracy was below the necessary threshold (see Section \ref{sec:effect-of-clustering-accuracy}).}
\label{fig:permutations}
\end{figure}

During our pre-training pipeline, the clustering step achieves $95.6 \pm 0.4$\% clustering accuracy. Under the Symbol Grounding Loss, our self-grounded training achieves $22.3 \pm 1.0$\% per-cell accuracy. One thing to note is that the self-grounded training step is susceptible to overfitting, and one needs to employ early stopping on the basis of per-cell error in order to learn the permutation matrix $\hat{P}$. See Figure \ref{fig:permutations} for an example of a learned $\hat{P}$ matrix.

Note that it is expected that the ungrounded version of the dataset will produce slightly worse results since it carries less information in its labels that its grounded counterpart. Another relevant aspect is that InfoGAN itself is sensitive to random seed. 4/10 runs converge to a clustering below the threshold necessary to ground symbols. We discuss this limitation further in Section \ref{sec:effect-of-clustering-accuracy}.

\subsection{Effect of Clustering Accuracy}\label{sec:effect-of-clustering-accuracy}
\begin{wrapfigure}{l}{0.60\textwidth}
\begingroup
\vspace{-8pt}
\begin{minipage}{0.58\textwidth}
    \centering
\includegraphics[width=\textwidth]{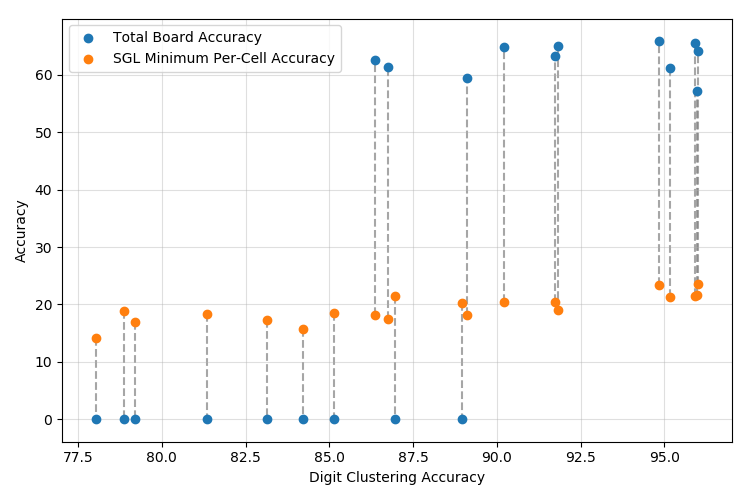}
\caption{The effect of clustering accuracy on our method's total board accuracy (blue) and per-cell accuracy during the Symbol Grounding Loss training phase (orange). Each pair of points connected by a dashed line indicates a different experiment. We note the sharp performance drop at roughly 88\% clustering accuracy.}
\label{fig:clustering}
\end{minipage}
\vspace{-20pt}
\endgroup
\end{wrapfigure}
An important ablation test to define some limitations of our approach is a study on the effect of clustering accuracy on our pre-training performance. It is difficult to measure this, as performance could vary based on the distribution of predictions across clusters, not only raw clustering accuracy. In this study we run our pipeline against InfoGAN at different stages of its training. In this way the cluster assignments start out uniform (based on noisy initialization) and gradually anneal to a 89.6 $\pm$ 7.7\% accurate clustering. We find that our system requires roughly at least 88\% clustering accuracy in order for the rest of the pipeline to progress. This is shown in Figure \ref{fig:clustering}. While this is a significant limitation to our approach, solving Ungrounded Visual Sudoku was not at all possible with SATNet prior to this work. Furthermore, even once clusters are assigned, it is highly nontrivial to obtain an accurate permutation matrix to ground the clusters.

\subsection{Effect of Distillation}

In the case that the clustering algorithm used in our first pre-training phase is differentiable, the distillation step becomes optional. Despite this, it is desirable to distill our clustering model if there exists some smaller architecture that can achieve similar performance in the supervised setting. This is the case with InfoGAN, which in its standard form uses an architecture with 7,307,997 parameters. We distill this into a LeNet-derived architecture, with only 1,049,080 parameters and comparable performance, as shown in Table \ref{tab:distillation} \cite{lecun-et-al-1998-lenet}. Training speed changes from $602 \pm 5$ seconds/epoch to $255 \pm 3$ seconds/epoch between the two architectures.

\begin{table}[h!]
    \centering
    \begin{tabular}{ cc}
    \hline
     \textbf{Digit Classifier} & 
     \textbf{Digit Clustering Accuracy} (\%) \\
     \hline
     InfoGAN & 89.6 $\pm$ 7.7 \\
     Distilled LeNet & 86.2 $\pm$ 13.5 \\
     \hline
    \end{tabular}
    \vspace{0.1in}
    \caption{The effect of distilling InfoGAN into a smaller LeNet-based convolutional architecture. InfoGAN has a tendency to plateau at different clusters. A ``successful'' InfoGAN run will plateau at roughly $95$\% accuracy. }
    \label{tab:distillation}
\end{table}

\subsection{Effect of Proofreading}

Proofreading improves the performance of both visual and non-visual Sudoku, as seen in Table \ref{tab:proofreading}. We achieve the following results by training the proofreader with ungrounded Datasets even if the original model which it augments was trained with the grounded version.

\begin{table}[h!]
    \begin{tabular}{c c c c c c} 
    \toprule
    \textbf{Model} & 
    \textbf{Proofreader} &
    \textbf{Total Board} &
    \textbf{Per-Cell} & 
    \textbf{Visual} \\ 
    \textbf{Configuration} &
    \textbf{Present?} &
    \textbf{Accuracy} (\%) & \textbf{Accuracy} (\%) & \textbf{Accuracy} (\%) \\
    \midrule
    Original Non-visual & 
    no &
    96.6 $\pm$ 0.3 &
    99.9 $\pm$ 0.0 &
    N/A \\
    \textbf{Original Non-visual} & 
    \textbf{yes} &
    \textbf{97.1 $\pm$ 0.3} &
    \textbf{99.9 $\pm$ 0.0} &
    N/A \\
    \midrule
    Original Visual & 
    no &
    66.5 $\pm$ 1.0 & 
    98.8 $\pm$ 0.1 & 
    99.0 $\pm$ 0.0 \\
    \textbf{Original Visual} & 
    \textbf{yes} &
    \textbf{67.6 $\pm$ 1.2} &
    \textbf{98.6 $\pm$ 0.1} &
    \textbf{99.0 $\pm$ 0.0} \\
    \midrule
    Our Method & 
    no &
    62.8 $\pm$ 3.2 &
    98.6 $\pm$ 0.1 &
    98.9 $\pm$ 0.1 \\
    \textbf{Our Method} &  
    \textbf{yes} &
    \textbf{64.8 $\pm$ 3.0} & 
    \textbf{98.4 $\pm$ 0.2} & 
    \textbf{98.9 $\pm$ 0.1} \\
    \bottomrule
    \end{tabular}
    \vspace{0.1in}
    \caption{The effect of adding a proofreading layer to the original versions of SATNet for both visual and non-visual Sudoku datasets, as well as the pre-training method proposed in this paper. We show that a proofreader uniformly improves the performance of SATNet.}
    \label{tab:proofreading}
\end{table}

We note that the numbers above from the original architectures reflect our reproduction of the results in the original paper. Please see Appendix \ref{sec:appendix-seed-sensitivity} for further details.

\section{Discussion}

\subsection{Sensitivity of SATNet to Random Seeds}\label{sec:sensitivity-to-seed-fix}

It was described in \cite{chang-et-al-2020-assessing-satnet} that SATNet exhibits a high sensitivity to the choice of random seed. For instance, 8 out of 10 random seeds would fail even with label leakage. While we initially reproduced this behavior, such sensitivity can in fact be circumvented with a minor correction to the PyTorch implementation, detailed further in Appendix \ref{sec:appendix-seed-sensitivity}. We use the corrected, stable model for comparison in all our results.

\subsection{Incorrect Upper Performance Bound} \label{incorrect-upper-bound}

In the original SATNet paper, it is argued that the performance of the visual Sudoku model is bound by the probability of identifying all the input cells on a particular board correctly. Thus when using LeNet, which has a classification accuracy of 99.2\%, the best performance we can expect on our dataset with 36.2 input cells on average is  $0.992^{36.2} = 74.8\%$ \cite{lecun-et-al-1998-lenet, wang-et-al-2019-satnet}. This is not exactly accurate.

It is not necessarily true that the SATNet layer cannot solve a board correctly if some number of input cells are wrong.  Intuitively, if one finds two of the same numbers as inputs in a row of a Sudoku puzzle, one can infer that one of those inputs might have been classified incorrectly. This can then be used to make an educated guess about the correct final board state. We are able to show that the SATNet layer is actually able to reason about incorrect input cells to a certain extent. Interestingly, SATNet's ability to reason is affected by whether an incorrectly labeled digit results in an unsolvable board or not. It is also affected by the presence of a Proofreader layer. Details on these experiments can be found in Appendix \ref{sec:appendix-error-injection}. 

While the upper bound posed originally may not be strictly correct, it is still a good guideline. Deriving a strict upper performance bound is likely quite difficult as the mathematics of logical problems such as Sudoku are not fully understood.

\section{Limitations \& Future Work} \label{sec:limitations-and-work}

While our method is able to address a new class of Visual MAXSAT problems with SATNet, it is limited by the need to prime the digit classifier with correct data clusters (see Section \ref{sec:effect-of-clustering-accuracy}). This imposes a constraint on which datasets can be used as visual inputs to this pipeline. One facet of this limitation is the fact that the current Symbol Grounding Loss function only supports inferring a permutation between the pre-trained encoding and the label encoding. This means that if there are $K$ label classes, the clustering algorithm must cluster the input data accurately in $K$ clusters. One might imagine allowing the Symbol Grounding Loss to support a more general surjective mapping between encoding domains, allowing for a higher number of clusters (and consequently a higher accuracy).

A second limitation is the tendency of the Symbol Grounding Loss to overfit somewhat quickly. While we experimented with several loss function formulations, further experimentation may prove useful. Implementation of regularisers in the architecture may also be an interesting avenue of research.

We believe neither of these limitations is fundamental; future investigation may help to alleviate them.

\section{Societal Impacts} \label{sec:broader-impact}

The goal of the present work is to advance methods integrating deep learning and logical reasoning, which has long been a goal of artificial intelligence and admits a broad range of applications. We also alleviate reproducibility issues with prior SATNet systems, describing in Section \ref{sec:sensitivity-to-seed-fix} how to fix previously observed training instabilities \cite{chang-et-al-2020-assessing-satnet}.

Potential negative implications from our work are largely indirect and hard to assess. We envision the possibility of work on neurosymbolic methods leading to unrealistic expectations of the power of deep learning methods, which are not yet capable of sophisticated reasoning. This could lead to inappropriate trust placed in current deep learning methods, or to backlash if expectations fall short.

\section{Conclusion}

Our work lays out a foundation for distinguishing between grounded and ungrounded variants of Visual MAXSAT problems, and presents a self-supervised pre-training methodology which enables SATNet to solve both classes. The ability to solve the more difficult Ungrounded Visual MAXSAT problems contrasts markedly with the previous state of the art, which was unable to surpass 0\% accuracy on these tasks. Further, we describe a proofreading methodology which can be used to incrementally improve both our architecture and prior models. This work extends the current state of the art for logical constraint-learning neurosymbolic methods, a promising area of research which boasts the potential to dramatically broaden the range of problems which machine learning can address.

\newpage
\bibliographystyle{unsrt}
\bibliography{references}

\newpage 

\appendix

\section{Proof of Proposition \ref{prop:loss}}\label{sec:appendix-sgl-simplification}

In this section, we prove Proposition \ref{prop:loss}.

\begin{proposition*}
Suppose $\mathcal{L}$ is defined as in \eqref{eq:sgl-max}. Then:
\begin{enumerate}
\item $\mathcal{L}(\hat{y}^{PTE}_{out}, y^{LE})$ is minimized if and only if $\hat{y}^{PTE}_{out}P = y^{LE}$ for some permutation matrix $P$.
\item In this case, the matrix $P$ is given by $P_{ij}:=\exp[-\BCE(y^{LE}(j),\hat{y}^{PTE}_{out}(i))]$.
\end{enumerate}
\end{proposition*}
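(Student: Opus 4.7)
The plan is to establish that $\mathcal{L}\geq 0$, characterize equality column by column, and then promote the resulting pointwise matching into a genuine permutation.

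First I would observe that for any $v\in\{0,1\}^n$ and any $w\in[0,1]^n$, the binary cross-entropy satisfies $\BCE(v,w)\geq 0$ with equality if and only if $w=v$ (and hence $w\in\{0,1\}^n$). Consequently each term $\exp[-\BCE(y^{LE}(j),\hat{y}^{PTE}_{out}(i))]$ lies in $[0,1]$ and equals $1$ exactly when $\hat{y}^{PTE}_{out}(i)=y^{LE}(j)$. This immediately yields $\mathcal{L}\geq 0$, together with the characterization that $\mathcal{L}=0$ iff for every column index $i\in\{1,\ldots,K\}$ there is some $\sigma(i)\in\{1,\ldots,K\}$ with $\hat{y}^{PTE}_{out}(i)=y^{LE}(\sigma(i))$.

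The main obstacle is showing that $\sigma$ is a bijection, so that the induced $0/1$ matrix is an honest permutation matrix rather than an arbitrary binary matrix. To do this I would invoke two structural assumptions that are standard in this setting: rows of $\hat{y}^{PTE}_{out}$ are probability distributions (entries in $[0,1]$ summing to $1$), and every class is represented in $y^{LE}$, so the columns of $y^{LE}$ are distinct nonzero indicators partitioning the row set. The matching forces every entry of $\hat{y}^{PTE}_{out}$ into $\{0,1\}$; combined with row sums equal to $1$, each row of $\hat{y}^{PTE}_{out}$ must be $1$-hot. Fixing any row $n$ whose label is $j^\ast$, the pointwise identity says that the $(n,i)$-entry of $\hat{y}^{PTE}_{out}$ equals the $(n,\sigma(i))$-entry of $y^{LE}$, which forces exactly one $i$ to satisfy $\sigma(i)=j^\ast$. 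Ranging over all label classes shows $\sigma$ is surjective, hence bijective on $\{1,\ldots,K\}$. Setting $P_{ij}=1$ if $\sigma(i)=j$ and $0$ otherwise then yields a permutation matrix, and a direct column computation gives $\hat{y}^{PTE}_{out}P=y^{LE}$. The converse is immediate: given any permutation $P$ with $\hat{y}^{PTE}_{out}P=y^{LE}$, each column of $\hat{y}^{PTE}_{out}$ equals some column of $y^{LE}$, so every inner $\max$ attains $1$ and $\mathcal{L}=0$.

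For part $2$ I would substitute $\hat{y}^{PTE}_{out}(i)=y^{LE}(\sigma(i))$ directly into the formula for $P_{ij}$ and argue case by case. If $j=\sigma(i)$ the BCE vanishes and $P_{ij}=1$. If $j\neq\sigma(i)$, the columns $y^{LE}(j)$ and $y^{LE}(\sigma(i))$ have disjoint nonempty supports, so they disagree at some index where one entry is $1$ and the other $0$; the corresponding BCE summand includes a $\log 0$ term, so $\BCE=+\infty$ and $P_{ij}=0$. This recovers exactly the indicator matrix of $\sigma$. The only subtle step is the bijectivity argument; everything else reduces to direct computation from the BCE definition.
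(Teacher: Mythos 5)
Your proof follows the same overall strategy as the paper's: lower-bound $\mathcal{L}$ by $0$ using $\BCE\ge 0$, characterize equality as a column-by-column matching between $\hat{y}^{PTE}_{out}$ and $y^{LE}$, and prove part (2) by the case analysis $\BCE=0$ versus $\BCE=+\infty$. The substantive difference is that you explicitly close a gap the paper leaves open: the paper's argument for part (1) stops at ``for every $i$ there exists a $j$ with $y^{LE}(j)=\hat{y}^{PTE}_{out}(i)$'' and declares the part proven, but that statement alone does not yield a permutation matrix --- if two distinct columns of $\hat{y}^{PTE}_{out}$ matched the same column of $y^{LE}$, no permutation $P$ could satisfy $\hat{y}^{PTE}_{out}P=y^{LE}$ (the columns of $y^{LE}$ being distinct). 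Your bijectivity argument, using the structural facts that the rows of $\hat{y}^{PTE}_{out}$ sum to $1$ and that the columns of $y^{LE}$ are distinct nonzero indicators partitioning the entries, is exactly the missing step, and it is correct: the matching forces every row of $\hat{y}^{PTE}_{out}$ to be $1$-hot, and then each label class is hit by exactly one $i$. The only caveat is that these structural facts are genuine additional hypotheses (every class must appear among the output labels, and the classifier outputs must be normalized); you are right to flag them, since the proposition as stated is not quite true without something of this kind. Part (2) of your argument matches the paper's essentially verbatim.
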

\begin{proof}
We first consider part (1), recalling that:
$$
\mathcal{L}(\hat{y}^{PTE}_{out}, y^{LE}) := 1-\mean_i(\max_j(\exp[-\BCE(y^{LE}(j), \hat{y}^{PTE}_{out}(i))])).
$$
Since the $\BCE$ loss is minimized at 0, we have:
\begin{align*}
\mathcal{L}(\hat{y}^{PTE}_{out}, y^{LE}) &= 1-\mean_i(\max_j(\exp[-\BCE(y^{LE}(j), \hat{y}^{PTE}_{out}(i))])) \\
&\ge 1 - \mean_i(\max_j(\exp[0])) \\
&= 0.
\end{align*}
and equality holds if and only if $\max_j(\exp[-\BCE(y^{LE}(j), \hat{y}^{PTE}_{out}(i))]) = 1$ for every $i$. This statement is true if and only if for every $i$ there exists a $j$ such that $\exp[-\BCE(y^{LE}(j), \hat{y}^{PTE}_{out}(i))] = 1$, or equivalently such that $\BCE(y^{LE}(j), \hat{y}^{PTE}_{out}(i)) = 0$.

Therefore, $\mathcal{L}$ reaches its minimum at $0$ if and only if for every $i$ there exists a $j$ such that $y^{LE}(j)=\hat{y}^{PTE}_{out}(i)$, proving part (1).

We now prove part (2). Suppose that $\hat{y}^{PTE}_{out}P = y^{LE}$, and suppose that $\sigma$ is the permutation defined by $P$, so that $\sigma(i^{PTE}) = j^{LE}$. Then, for each $i,j$, we have:
$$
\BCE(y^{LE}(j),\hat{y}^{PTE}_{out}(i)) = \begin{cases} 
0 & \text{if }\sigma(i)=j\\
+\infty & \text{otherwise},
\end{cases}
$$
and therefore
$$
\label{eq:perm-proof}
\exp[-\BCE(y^{LE}(j),\hat{y}^{PTE}_{out}(i))] = \begin{cases} 
1 & \text{if }\sigma(i)=j\\
0 & \text{otherwise}.
\end{cases}
$$
This proves part (2).
\end{proof}

\section{Random Seed Sensitivity Fix}\label{sec:appendix-seed-sensitivity}

There are two aspects of the implementation which alleviate the sensitivity of SATNet on random seed. Please note that for this section we discuss the SATNet architecture from the original paper trained on an Ungrounded Visual Sudoku dataset \cite{wang-et-al-2019-satnet}. The first was a minor bug in the CUDA implementation of SATNet's backprop calculation. The second relates to how supervision is provided on the digit classifier $\mathcal{D}$ during training. 

Recall in section \ref{sec:preliminaries} that we can partition any Ungrounded Visual MAXSAT labels into input and output variable labels. Let us also reference our notation from section \ref{sec:method}, where we have our digit classifier $\mathcal{D}(x) = \hat{y}^{PTE}_{in}$ and our SATNet layer $\mathcal{S}(\hat{y}^{PTE}_{in}) = \hat{y}^{PTE}_{out}$. We essentially have two options for returning the architecture's predictions for the input variables, as they are present in both $\hat{y}^{PTE}_{in}$ and $\hat{y}^{PTE}_{out}$. The choice of which of these to return results in a significant performance difference. The original SATNet model returned the input variable predictions from $\hat{y}^{PTE}_{out}$, whereas we return the ones in $\hat{y}^{PTE}_{in}$. Note that since the input variables are held constant in the SATNet layer $\mathcal{S}$, the nominal value of the input variables is the same between these two returns. The only difference is the implication of how gradients are computed.

\newpage
\section{Effect of Error Injection for Visual Sudoku}\label{sec:appendix-error-injection}

We construct an experiment by running the nonvisual Sudoku model, and perturbing input cell labels in the test datasets. Under no change to the original architecture, a SATNet layer trained on correct inputs is able to solve some small percentage of the boards with erroneous inputs. Interestingly, SATNet's ability to solve these puzzles depends on whether the error injection resulted in an unsolvable board or not. While adding a proofreader improves performance under normal circumstances, in the presence of injected errors it worsens performance.

\begin{table}[!h]
  \centering
  \begin{tabular}{c c c} 
    \toprule
    \textbf{Number of Injected Input} & \multicolumn{2}{c}{\textbf{Total Board Accuracy} (\%)} \\ 
    \cmidrule(r){2-3}
    \textbf{Cell Errors per Board} & \textbf{Solvable} & \textbf{Unsolvable} \\
    \midrule
    0 & 
    96.6 $\pm$ 0.3 & 
    96.6 $\pm$ 0.3 \\
    1 & 
    3.2 $\pm$ 2.0 &
    0.0 $\pm$ 0.0 \\
    2 &
    0.1 $\pm$ 0.0 &
    0.0 $\pm$ 0.0 \\
    3 &
    0.0 $\pm$ 0.0 &
    0.0 $\pm$ 0.0  \\
    \bottomrule
  \end{tabular}
  \vspace{0.1in}
  \caption{Solvable/Unsolvable split under error injection for traditional nonvisual Sudoku solver. Some boards are still solved by SATNet even when not all input cells are correct.}
\end{table}

\begin{table}[!h]
  \centering
  \begin{tabular}{c c c} 
    \toprule
    \textbf{Number of Injected Input} & \multicolumn{2}{c}{\textbf{Total Board Accuracy} (\%)}\\ 
    \cmidrule(r){2-3}
    \textbf{Cell Errors per Board} & \textbf{Solvable} & \textbf{Unsolvable} \\
    \midrule
    0 & 
    97.1 $\pm$ 0.3 & 
    97.1 $\pm$ 0.3 \\
    1 & 
    1.9 $\pm$ 0.0 &
    0.0 $\pm$ 0.0 \\
    2 &
    0.0 $\pm$ 0.0 &
    0.0 $\pm$ 0.0 \\
    3 &
    0.0 $\pm$ 0.0 &
    0.0 $\pm$ 0.0  \\
    \bottomrule
  \end{tabular}
  \vspace{0.1in}
  \caption{Solvable/Unsolvable split under error injection for nonvisual Sudoku solver with proofreader.}
\end{table}

\section{Codebases used for Experimentation}\label{sec:appendix-codebases}

Our experiments are built on top of two codebases. We leverage the SATNet implementation provided by the original authors at \texttt{https://github.com/locuslab/SATNet}, in addition to an InfoGAN implementation available at \texttt{https://github.com/Natsu6767/InfoGAN-PyTorch}. We have made our implementation public at \texttt{https://github.com/SeverTopan/SATNet}.

\end{document}